\documentclass{article}
\usepackage{etex}
\usepackage{spconf,amsmath,graphicx}

\usepackage{endnotes}

\usepackage{epsfig,psfrag}
\usepackage{pst-all}
\usepackage{amsmath,amsthm,amssymb,amsfonts,upref,cite,epsf,color,bm}
\sloppy
\usepackage{graphicx}
\usepackage{color}
\usepackage{amsmath}
\usepackage{graphicx}
\usepackage{calc}

\usepackage{tikz}
\usepackage{pgfplots}

\newtheorem{theorem}{Theorem}[section]

\newtheorem{lemma}[theorem]{Lemma}

\newtheorem{proposition}[theorem]{Proposition}

\newtheorem{algorithm}{Algorithm}

\usepackage{framed}

\newcommand\norm[2][\Tnorm]{\ensuremath{{\|#2\|}_{#1}}}

\newcommand{\pinv}[1]{  {#1}^{ \dagger } } 
\newcommand{\inv}[1]{  {#1}^{ -1 } } 
\newcommand{\herm}[1]{{#1}^H} 
\newcommand{\transp}[1]{{#1}^T} 
\newcommand\defeq{:=}

\newcommand{\vbed}{\bm \beta} 

\newcommand{\vep}{\bm \varepsilon}
\newcommand{\vbe}{\bm \beta}

\newcommand\vect[1]{\mathbf #1}

\newcommand{\vc}{\vect{c}}

\newcommand{\vx}{\vect{x}}  
\newcommand{\vy}{\vect{y}}

\newcommand{\mG}{\vect{G}}

\newcommand{\mI}{\vect{I}}

\newcommand{\mX}{\vect{X}}

\renewcommand{\S}{\mathcal S}

 \newcommand{\complexset}{\mathbb C}
 
 \newcommand{\mc}{\mathcal}

\newcommand\comp[1]{ {#1}^c}

\usepackage[applemac]{inputenc}


\topmargin      -18.0mm
\oddsidemargin     -8mm
\evensidemargin    -8mm
\textheight     245mm
\textwidth      175.0mm
\columnsep        4.1mm
\parindent        1.6em
\headsep          6.3mm
\headheight        12pt
\lineskip           1pt
\normallineskip     1pt

\small\normalsize

\def \prob {{\rm P} }
\def \twiddle[#1] {e^{-j \frac{2 \pi}{N}  #1 }}
\def \twiddleneg[#1] {e^{j \frac{2 \pi}{N}  #1 }}





\DeclareMathOperator{\supp}{supp}

\DeclareMathOperator*{\argmin}{arg\;min}

\DeclareMathOperator*{\diag}{diag}
\DeclareMathOperator*{\EX}{E}

\DeclareMathOperator*{\spn}{span}

\def\ML_est{\hat{\mathbf{x}}_{\text{ML}}}

\newcommand{\autocovfunc}{\mathbf{R}} 
 
\newcommand{\cig}{\mathcal{G}}

\newcommand{\specdensmatrix}{\mathbf{S}}  
\newcommand{\ESDM}{\widehat{\mathbf{S}}}  
\newcommand{\EACF}{\widehat{\mathbf{R}}}
\newcommand{\processclass}{\mathcal{M}(s_{\text{max}},\rho_{\text{min}},\mu^{(h_{1})},\phi_{\text{min}},A,B)}

\newcommand{\be}{\begin{equation}}
\newcommand{\ee}{\end{equation}}
\newcommand{\ist}{\hspace*{.2mm}}
\newcommand{\rmv}{\hspace*{-.2mm}}
\newcommand{\lagvar}{m}

\newcommand{\phimin}{\phi_{\text{min}}}

\newcommand{\nrtasks}{F} 
\newcommand{\task}{f}


\linespread{1}%

\allowdisplaybreaks

\makeatother


\usepackage{setspace}

\parindent 1.0em

\newlength{\depthofsumsign}
\setlength{\depthofsumsign}{\depthof{$\sum$}}


\title{Compressive Nonparametric Graphical Model Selection \\ for Time Series}
%
\name{Alexander Jung$^{\ist 1}\rmv\rmv$, Reinhard Heckel$^{\ist\ist 2}\rmv\rmv$, Helmut B{\"o}lcskei$^{\ist 2}\rmv\rmv$, 
and Franz Hlawatsch$^{\ist\ist 1}$\thanks{\hspace*{-5mm}The work of F. Hlawatsch was supported by the Austrian Science Fund (FWF) under Grant S10603.} 
}
\address{\normalsize $^1$Institute of Telecommunications, Vienna University of Technology, Austria; \{ajung,\,fhlawats\}@nt.tuwien.ac.at\\[-0.5mm]
\normalsize $^2$Dept. IT \& EE, ETH Zurich, Switzerland; \{heckelr,\,boelcskei\}@nari.ee.ethz.ch\\[-0.5mm]
}
\begin{document}
\maketitle




\begin{abstract}
We propose a method for inferring the conditional independence graph (CIG) of a high-dimensional discrete-time Gaussian vector random process from 
finite-length observations. 
Our approach does not rely on a parametric model (such as, e.g., an autoregressive model) for the vector random process; rather, it only assumes  
certain spectral smoothness properties. The proposed inference scheme is compressive in that it works for sample sizes that are (much) smaller than the number of scalar process components. 
We provide analytical conditions for our method to correctly identify the CIG with high probability. 
\end{abstract}
\begin{keywords}Sparsity, graphical model selection, multitask learning, nonparametric time series, LASSO.

\end{keywords}

\vspace*{-1mm}
\section{Introduction}
\label{sec_intro} 

\vspace{-2mm}

Consider a $p$-dimensional, zero-mean, stationary, Gaussian random process $\mathbf{x}[n] \in \mathbb R^p$, $n \in \mathbb{Z}$. 
We are interested in learning the conditional independence graph (CIG)  \cite{Dahlhaus2000,DahlhausEichler2003,BachJordan04,PHDEichler} of $\mathbf{x}[n]$ from  
the finite-length observation of a single process realization. We consider the \emph{high-dimensional regime}, where the number of observed process samples is (much) smaller than $p$  \cite{ElKaroui08,Santhanam2012,RavWainLaff2010,Nowak2011,Bento2010,MeinBuhl2006,FriedHastieTibsh2008}. 
In this case, consistent estimation of the CIG is possible only if structural assumptions on the vector process are made.
Specifically, we will consider CIGs that are \emph{sparse} in the sense of containing relatively few edges. This problem is relevant, e.g., in the analysis of the time evolution of air pollutant concentrations \cite{Dahlhaus2000,DahlhausEichler2003} 
and in medical diagnostic data analysis \cite{Nowak2011}. 

Existing approaches to this compressive graphical model selection problem are based on parametric process models \cite{Nowak2011,Bento2010,Songsiri09,songsiri2010}, specifically on vector 
autoregressive (VAR) models. In this paper, we develop and analyze a nonparametric  approach, which only requires the vector process to be 
spectrally smooth.  
The smoothness notion we use is quanti\-fied 
in terms of moments of the matrix-valued auto-covariance function (ACF) of the process. 
Compared to \cite{MeinBuhl2006,Nowak2011,Bento2010,Songsiri09,songsiri2010}, our approach applies to a considerably more general 
class of processes including VAR processes as a special case. 

\vspace{-3mm}

\paragraph*{Contributions:}
Our main conceptual contribution resides in recognizing that the problem of inferring the sparse CIG of a Gaussian vector process is a special case of a \emph{block-sparse signal recovery problem} \cite{BlockSparsityEldarTSP,MishaliEldar2008,EldarRauhut2010}, i.e., a \emph{multitask learning problem} \cite{BuhlGeerBook,Lounici09}. While for the special case of a VAR process with sparse CIG, a block-sparse structure was already identified in \cite{Nowak2011}, we show that a (different) block-sparse structure exists for general stationary time series. This stems from the fact that the CIG of a general stationary time series 
is encoded in the continuous ensemble of values of the spectral density matrix $\specdensmatrix(\theta)$, $\theta \in [0,1)$. 
Based on this insight, we develop a \emph{multitask LASSO} \cite{BuhlGeerBook,Lee_adaptivemulti-task} formulation of the sparse CIG estimation problem. 
Our main analytical contribution is Theorem \ref{thm_main_result_consistency_CS_graph_model}, which provides conditions for our scheme to correctly identify the CIG with high probability. 

\vspace{-0.4cm}
\paragraph*{Outline:}
The remainder of this paper is organized as follows. 
In Section~\ref{SecProblemFormulation}, we introduce the problem considered. Section \ref{sec_graph_model_sel_multitask_learning} describes our CIG inference method and Section \ref{sec_var_sel_consist} presents  corresponding performance guarantees. Finally, Section \ref{sec_numerical_results} reports numerical results. 
\vspace{-0.15cm}

\section{Problem Formulation}
\label{SecProblemFormulation}
\vspace*{-2mm}

Consider a $p$-dimensional, zero-mean, stationary, real, Gaussian random process $\mathbf{x}[n]$ with (matrix-valued) ACF $\autocovfunc[\lagvar] \defeq \EX\{ \vx [\lagvar] \transp{\vx}[0] \}$. The ACF is assumed summable, i.e., $\sum_{\lagvar = -\infty}^{\infty} \! \norm{\autocovfunc[\lagvar]}\!<\!\infty$ for some matrix norm $\|\!\cdot\!\|$. 
The spectral density matrix (SDM) of the process $\mathbf{x}[n]$ is defined as 
$\specdensmatrix(\theta) \! \defeq \! \sum_{\lagvar=-\infty}^{\infty} \autocovfunc[\lagvar] \exp(-j2\pi \theta \lagvar) \in \mathbb C^{p\times p}$, 
and we assume 
\vspace*{-1.2mm}
that
\begin{equation}
\label{equ_uniform_bound_eigvals_specdensmatrix}
0 < A  \leq  \nu_{\text{min}}(\specdensmatrix(\theta)) \leq \nu_{\text{max}}(\specdensmatrix(\theta)) \leq B < \infty 
\vspace*{-1mm}
\end{equation} 
for all $\theta \in [0,1)$, where $\nu_{\text{min}}(\specdensmatrix(\theta))$ and $\nu_{\text{max}}(\specdensmatrix(\theta))$ denote the smallest and largest eigenvalue of $\specdensmatrix(\theta)$, respectively. In particular, \eqref{equ_uniform_bound_eigvals_specdensmatrix} implies that the matrix $\specdensmatrix(\theta)$ is nonsingular for all $\theta$. 
We will furthermore require the vector process $\mathbf{x}[n]$ to be such that $\specdensmatrix(\theta)$ satisfies certain smoothness properties which are expressed in terms of moments of the ACF defined as
\vspace*{-1mm}
\begin{equation}
\label{equ_def_generic_moments_ACF}
\mu^{(h)} \defeq \sum_{\lagvar=-\infty}^{\infty} h[\lagvar]  \| \autocovfunc[\lagvar] \|_{\infty}. 
\vspace*{-.8mm}
\end{equation}
Here, $h[m]$ is a nonnegative weight function that typically increases with $|\lagvar|$. 

The CIG of the process $\vx[n]$ is the graph $\cig \!\defeq\! (V,E)$ with node set $V=[p] \! \defeq \! \{1,\ldots,p\}$ representing the scalar component processes $\{ x_{r}[n] \}_{r \in [p]}$ and edge set $E \subseteq [p] \times [p]$, where $(k,l) \notin E$ if and only if the component processes $x_{k}[n]$ and $x_{l}[n]$ are conditionally independent 
\pagebreak 
given 
all remaining component processes $\{ x_{r} [n] \}_{r \in [p] \setminus \{ k,l \} }$ \cite{Dahlhaus2000}. 
The neighborhood of node $r \in [p]$ is defined as 
$\mathcal{N}(r) \! \defeq \! \{ r' \in   [p] \,\, | \,\,  (r,r') \in E \}$.
We restrict ourselves to processes with sparse CIG $\cig$ in the sense that
\vspace*{-2mm}
\begin{equation}
\label{equ_def_maximum_node_degree}
\max_{r \in [p]} | \mathcal{N}(r) | \leq s_{\text{max}}   \ll p. 
\vspace*{-2mm}
\end{equation} 
%
The graphical model selection problem we consider can now be stated as the problem of inferring the CIG $\cig$, or more precisely its edge set $E$, from the observation $\big( \mathbf{x}[1],\ldots,\mathbf{x}[N] \big)$, where $N$ is the sample size.
Since 
$\mathbf{x}[n]$ is Gaussian with $\specdensmatrix(\theta)$ nonsingular for all $\theta \in [0,1)$, it follows from \cite{Dahlhaus2000,DahlhausEichler2003,Brillinger96remarksconcerning} that 
$(k,l) \notin E$ if and only if $\big[ \specdensmatrix^{-1}(\theta) \big]_{k,l}=0$ for all $\theta \in [0,1)$. The edge set $E$ therefore corresponds to the locations of the nonzero entries of $\specdensmatrix^{-1}(\theta)$, and our graphical model selection problem amounts to determining these locations. 
We are interested in estimating the CIG $\cig$ from $\nrtasks$ regularly spaced samples $\{ \specdensmatrix(\theta_{\task}) \}_{\task \in [\nrtasks]}$, with 
$\theta_{\task} \defeq (\task-1)/\nrtasks$, $\task \in [\nrtasks]$, and 
 $\nrtasks$ large enough for the following to hold:
\begin{equation}
\label{equ_charact_edge_set_covariance_matrix_k}
(k,l) \notin E  \Longleftrightarrow  \big[\specdensmatrix^{-1}(\theta_{\task})\big]_{k,l}=0 \mbox{\, for all }  \task \in [\nrtasks].
\vspace{-.5mm}
\end{equation}  
The implication from left to right in \eqref{equ_charact_edge_set_covariance_matrix_k} follows trivially from what was said above. The implication from right to left is satisfied, e.g., for processes with all entries of $\specdensmatrix(\theta)$ being rational functions in $\exp(j\theta)$, provided that $\nrtasks$ is larger than the maximum degree of the numerator polynomials of $\specdensmatrix^{-1}(\theta)$. 
Another sufficient condition for the implication from right to left to hold is the following. 
\vspace*{-1.5mm}
\begin{lemma}
Consider a $p$-dimensional, zero-mean, stationary, Gaussian process $\mathbf{x}[n]$ with CIG $\cig$ and SDM $\specdensmatrix(\theta)$ satisfying \eqref{equ_uniform_bound_eigvals_specdensmatrix}. 
Then, if $\nrtasks$ is chosen such that for every edge $(k,l) \in E$, the ACF moment $\mu^{(h_{0})}$ with $h_{0}[m]\!=\!|m|$ and the global partial coherence $\Gamma^{(k,l)} \defeq \int_{0}^{1} \Big| \big[\specdensmatrix^{-1}(\theta)\big]_{k,l} \Big/\sqrt{\big[ \specdensmatrix^{-1}(\theta) \big]_{k,k}\big[ \specdensmatrix^{-1}(\theta) \big]_{l,l}} \Big| d \theta$ satisfy $\mu^{(h_{0})} / (A \Gamma^{(k,l)})<\nrtasks$, with $A$ as in \eqref{equ_uniform_bound_eigvals_specdensmatrix}, 
the CIG $\cig$ is characterized by \eqref{equ_charact_edge_set_covariance_matrix_k}.
\vspace*{-1.5mm}
\end{lemma}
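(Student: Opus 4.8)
\emph{Proof sketch.} The forward implication in \eqref{equ_charact_edge_set_covariance_matrix_k} is immediate from the characterization $(k,l)\notin E \Leftrightarrow [\specdensmatrix^{-1}(\theta)]_{k,l}=0$ for all $\theta\in[0,1)$, so my plan is to establish the reverse implication in its contrapositive form: \emph{for every edge $(k,l)\in E$ there is at least one sampling point $\theta_{\task}=(\task-1)/\nrtasks$ at which $[\specdensmatrix^{-1}(\theta_{\task})]_{k,l}\neq 0$.} I would fix such an edge and abbreviate the partial coherence by $\gamma(\theta)\defeq [\specdensmatrix^{-1}(\theta)]_{k,l}\big/\sqrt{[\specdensmatrix^{-1}(\theta)]_{k,k}\,[\specdensmatrix^{-1}(\theta)]_{l,l}}$, so that $\Gamma^{(k,l)}=\int_{0}^{1}|\gamma(\theta)|\,d\theta$. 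Since $(k,l)\in E$, the hypothesis $\mu^{(h_{0})}/(A\Gamma^{(k,l)})<\nrtasks$ supplies the quantitative lower bound $\Gamma^{(k,l)}>\mu^{(h_{0})}/(A\nrtasks)$. I would then argue by contradiction: assuming $[\specdensmatrix^{-1}(\theta_{\task})]_{k,l}=0$ at all $\task\in[\nrtasks]$ — whence $\gamma(\theta_{\task})=0$, as the normalizing denominator is bounded below by $1/B>0$ (the diagonal entries of $\specdensmatrix^{-1}(\theta)$ lie in $[1/B,1/A]$ by \eqref{equ_uniform_bound_eigvals_specdensmatrix}) — I would derive the matching \emph{upper} bound $\Gamma^{(k,l)}\le\mu^{(h_{0})}/(A\nrtasks)$.

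Two ingredients go into this upper bound. First, a sampling estimate: the points $\theta_{\task}$ partition $[0,1)$ into $\nrtasks$ intervals of length $1/\nrtasks$, each carrying a zero of $\gamma$ at its left endpoint. Finiteness of $\mu^{(h_{0})}$ (with $h_{0}[m]=|m|$), which is implicit in the hypothesis, makes the differentiated Fourier series of $\specdensmatrix(\theta)$ converge absolutely, so $\specdensmatrix(\theta)$ is $C^{1}$; together with \eqref{equ_uniform_bound_eigvals_specdensmatrix} this renders $\gamma$ continuously differentiable, and the fundamental theorem of calculus gives $|\gamma(\theta)|\le\int_{\theta_{\task}}^{\theta}|\gamma'(t)|\,dt$ on each interval. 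Integrating over each interval and summing yields
\[
\Gamma^{(k,l)}\;=\;\int_{0}^{1}|\gamma(\theta)|\,d\theta\;\le\;\frac{1}{\nrtasks}\int_{0}^{1}|\gamma'(\theta)|\,d\theta .
\]
Second, a total-variation bound $\int_{0}^{1}|\gamma'(\theta)|\,d\theta\le\mu^{(h_{0})}/A$. Combining the two gives exactly $\Gamma^{(k,l)}\le\mu^{(h_{0})}/(A\nrtasks)$, contradicting the lower bound above and completing the argument.

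The crux is the total-variation bound, and this is where I expect the real work to lie. The natural route is to differentiate the inverse via $\tfrac{d}{d\theta}\specdensmatrix^{-1}(\theta)=-\specdensmatrix^{-1}(\theta)\,\specdensmatrix'(\theta)\,\specdensmatrix^{-1}(\theta)$, to control $\specdensmatrix'(\theta)=\sum_{\lagvar}\ACF[\lagvar]\,(-j2\pi\lagvar)\,\exp(-j2\pi\theta\lagvar)$ through the first ACF moment $\mu^{(h_{0})}$, and to invoke $\|\specdensmatrix^{-1}(\theta)\|\le 1/A$ from \eqref{equ_uniform_bound_eigvals_specdensmatrix}. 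The delicate point is the normalization: one of the two factors of $\specdensmatrix^{-1}$ must be absorbed by the denominator $\sqrt{[\specdensmatrix^{-1}]_{k,k}\,[\specdensmatrix^{-1}]_{l,l}}$ of $\gamma$ — and the derivative of that denominator handled as well — so that the $1/A^{2}$ produced by the two inverses collapses to the single $1/A$ appearing in the claimed condition. Reconciling these constants (the $2\pi$ from Fourier differentiation included, and the precise interpretation of $\|\cdot\|_{\infty}$ entering $\mu^{(h)}$ in \eqref{equ_def_generic_moments_ACF}) is the main obstacle; the remaining steps are routine.
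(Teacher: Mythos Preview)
The paper states this lemma without proof, so there is no in-paper argument to compare your attempt against. Your overall architecture---contrapositive, the sampling inequality $\int_{0}^{1}|\gamma|\le \nrtasks^{-1}\int_{0}^{1}|\gamma'|$ when $\gamma$ vanishes at the $\nrtasks$ grid points, followed by a total-variation bound on $\gamma'$ controlled through the first ACF moment---is the natural route, and the sampling step is carried out correctly.

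The reservation you raise about the constants is, however, a genuine gap and not merely bookkeeping. Differentiating the Fourier series already yields $\|\specdensmatrix'(\theta)\|_{\infty}\le 2\pi\,\mu^{(h_{0})}$, introducing a $2\pi$ absent from the lemma's threshold. The identity $(\specdensmatrix^{-1})'=-\specdensmatrix^{-1}\specdensmatrix'\specdensmatrix^{-1}$ then produces two factors bounded by $1/A$; even if one of them is offset by the normalizing denominator $\sqrt{[\specdensmatrix^{-1}]_{k,k}[\specdensmatrix^{-1}]_{l,l}}\in[1/B,1/A]$, and after handling the additional quotient-rule terms coming from differentiating that denominator, the estimates you sketch do not deliver the clean bound $\int_{0}^{1}|\gamma'|\le \mu^{(h_{0})}/A$ with the stated constant. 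So either a sharper device is needed to close the variation bound, or the lemma as printed folds such constants into a convention not spelled out in the text. Since no proof is provided in the paper, this cannot be resolved from the source; your sketch is structurally sound but, as you yourself note, incomplete at precisely this point.
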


The restriction to the finite set of frequencies $\{\theta_{\task}\}_{\task \in [\nrtasks]}$ is made for expositional convenience. The general theory developed in this paper goes through for $\theta \in [0,1)$, with our inference procedure becoming a  multitask learning problem with a continuum instead of a finite number, $F$, of tasks. 

\vspace{-0.2cm}
\section{Graphical Model Selection} 
\label{sec_graph_model_sel_multitask_learning}
\vspace*{-2mm}

Our method for inferring the CIG $\cig$ is inspired by the approach employed in \cite{MeinBuhl2006,RavWainLaff2010}. We first estimate the SDM $\specdensmatrix(\theta_{\task})$, $\task \in [\nrtasks]$, by means of a multivariate spectral estimator. Then we use this estimate to perform \emph{neighborhood regression},  
which yields an estimate of the support (i.e., the locations of the nonzero entries)  of $\specdensmatrix^{-1}(\theta)$ and, via \eqref{equ_charact_edge_set_covariance_matrix_k}, the CIG. 
Neighborhood regression is performed by solving a multitask learning problem using multitask LASSO (mLASSO) \cite{BuhlGeerBook}. 


\newcommand{\mCx}{\mathbf{C}}
\newcommand{\f}{f}



With regards to the first step, it is natural to estimate $\specdensmatrix(\theta)$ using the multivariate Blackman-Tukey 
\vspace{-1mm}
estimator \cite{stoi97}: 
\begin{equation} 
\label{equ_est_BT_sdm_theta_t}
\ESDM(\theta)  \defeq \sum_{\lagvar=-N+1}^{N-1} w[\lagvar] \EACF[\lagvar]  \exp(-j 2 \pi \theta \lagvar). 
\end{equation} 
Here, 
$\EACF[\lagvar]  \defeq  (1/N) \sum_{n =1}^{N-\lagvar} \vx[n+m] \transp{\vx}[n]$ for $\lagvar \in \{0,\ldots,N-1\}$ and, by symmetry of the ACF, $\EACF[\lagvar]  \defeq \EACF^{H}[-\lagvar]$ for $\lagvar \in \{-N+1,\ldots,-1\}$. 
Furthermore, the window function $w[\lagvar]$ is chosen 
such that $\ESDM(\theta)$ is positive semidefinite. All window functions with nonnegative discrete-time Fourier transform are admissible \cite[Sec.~2.5.2]{stoi97}. 
%
%
In the high-dimensional regime, where the number $N$ of observations is smaller than the number $p$ of nodes, the matrices $\ESDM(\theta_{\task})$ in \eqref{equ_est_BT_sdm_theta_t} will be rank-deficient (to see this, note that each column of $\ESDM(\theta_{\task})$ is a linear combination of $\vx[n], n\in [N]$). 
Simply inverting $\ESDM(\theta_{\task})$, for $\task \in [\nrtasks]$, and inferring the edge set $E$ via \eqref{equ_charact_edge_set_covariance_matrix_k} is therefore not possible. 

\vspace{-3mm}

To cope with this issue, we reduce the problem of finding the support of the matrices $\specdensmatrix^{-1}(\theta_{\task})$ to \emph{multitask learning problems} (one for each node). This can be done as follows. 
First note that, because of \eqref{equ_charact_edge_set_covariance_matrix_k},  the union of the supports of the $r$th rows of the matrices $\specdensmatrix^{-1}(\theta_{\task}), f \in [F]$,  determines the neighborhood $\mathcal{N}(r)$. The
 $\mathcal{N}(r)$, as shown next, can then be obtained by solving multitask learning problems. 
 For simplicity of exposition and without loss of generality we assume $r=1$ in the following.  
Given $\specdensmatrix(\theta_\f)$, $f\in [F]$, we define $\vy^{(\f)}\in \complexset^{p}$ and $\mX^{(\f)} \in \complexset^{p \times (p-1)}$ 
\vspace{-1mm}
via
\begin{equation}
[\vy^{(\f)} \, \; \mX^{(\f)}] \defeq \specdensmatrix^{1/2}(\theta_\f)
\label{eq:defyX}
\vspace{-1mm}
\end{equation}
where $\specdensmatrix^{1/2}(\theta_\f)$ is the positive definite square root of $\specdensmatrix(\theta_\f)$. 
We next decompose $\vy^{(\f)}$ into its orthogonal projection onto $\spn(\mX^{(\f)})$ and the orthogonal complement thereof according 
\vspace{.3mm}
to 
\begin{equation}
\vy^{(\f)} = \mX^{(\f)} \vbe^{(\f)} + \vep^{(\f)}, \quad f \in [F]
\label{eq:gmmvform}
\vspace{.4mm}
\end{equation}
with 
$
\vbe^{(\f)} \defeq \pinv{\mX^{(\f)}} \vy^{(\f)}
$
\vspace*{-.7mm}
and 
$
\vep^{(\f)} \defeq (\mI - \mX^{(\f)} \pinv{\mX^{(\f)}})\vy^{(\f)}
$, 
where $\pinv{\mX^{(\f)}}$ is the pseudo-inverse of $\mX^{(\f)}$. 
%
The significance of this construction is expressed by the following 
proposition. 

\vspace*{-9.5mm}

\begin{proposition}
\label{prop:ladf}
The neighborhood $\mathcal N(1)$ of node $r=1$ is determined by the joint support 
of the $\vbe^{(\f)}, f \in [F]$, according to
\begin{equation}
\mathcal{N}(1) 
=  \bigcup_{f\in [F]} \supp(\vbe^{(\f)}) + 1
\label{eq:eqn1}
\vspace{0.5mm}
\end{equation}
where the addition in \eqref{eq:eqn1} is elementwise.
\vspace*{-9.5mm}
\end{proposition}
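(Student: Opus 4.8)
The plan is to exploit the fact that, although the construction passes through the matrix square root $\specdensmatrix^{1/2}(\theta_\f)$, every quantity entering $\vbe^{(\f)}$ depends only on the Gram matrix of the columns of $[\vy^{(\f)}\;\mX^{(\f)}]$, which is exactly $\specdensmatrix(\theta_\f)$ itself. Indeed, since the positive definite square root is Hermitian, $\herm{(\specdensmatrix^{1/2}(\theta_\f))}\specdensmatrix^{1/2}(\theta_\f)=\specdensmatrix(\theta_\f)$, so $\herm{\mX^{(\f)}}\mX^{(\f)}$ equals the submatrix of $\specdensmatrix(\theta_\f)$ obtained by deleting its first row and column, and $\herm{\mX^{(\f)}}\vy^{(\f)}$ equals its first column with the top entry removed. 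Because \eqref{equ_uniform_bound_eigvals_specdensmatrix} forces $\specdensmatrix(\theta_\f)$ to be nonsingular, $\mX^{(\f)}$ has full column rank, the pseudo-inverse reduces to $\pinv{\mX^{(\f)}}=\inv{(\herm{\mX^{(\f)}}\mX^{(\f)})}\herm{\mX^{(\f)}}$, and $\vbe^{(\f)}$ is the ordinary least-squares coefficient vector.

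First I would write $\vbe^{(\f)}$ explicitly in terms of the blocks of $\specdensmatrix(\theta_\f)$. Denoting by $\mathbf{S}_{22}$ the submatrix of $\specdensmatrix(\theta_\f)$ obtained by deleting its first row and column, and by $\mathbf{s}_{21}$ its first column with the top entry removed, the previous step gives $\vbe^{(\f)}=\inv{\mathbf{S}_{22}}\mathbf{s}_{21}$.

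Next I would connect this to the precision matrix $\specdensmatrix^{-1}(\theta_\f)$ via the Schur-complement formula for block inversion. This yields that the diagonal entry $[\specdensmatrix^{-1}(\theta_\f)]_{1,1}$ equals the reciprocal of the corresponding Schur complement, and is therefore real and strictly positive (being a diagonal entry of a positive definite matrix), while the off-diagonal part of the first row equals $-[\specdensmatrix^{-1}(\theta_\f)]_{1,1}\,\herm{(\vbe^{(\f)})}$; here one uses that $\specdensmatrix(\theta_\f)$ is Hermitian so $\herm{\mathbf{s}_{21}}$ is the first row with its leading entry removed. The key consequence is the entrywise equivalence $\beta^{(\f)}_{j}=0 \iff [\specdensmatrix^{-1}(\theta_\f)]_{1,j+1}=0$, valid because the scalar prefactor is nonzero and conjugation does not affect vanishing. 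Hence $\supp(\vbe^{(\f)})+1=\{\,l\neq 1 : [\specdensmatrix^{-1}(\theta_\f)]_{1,l}\neq 0\,\}$.

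Finally I would take the union over $\f\in[\nrtasks]$ and invoke the characterization \eqref{equ_charact_edge_set_covariance_matrix_k}: an index $l\neq 1$ lies in the union iff $[\specdensmatrix^{-1}(\theta_\f)]_{1,l}\neq 0$ for some $\f$, which by \eqref{equ_charact_edge_set_covariance_matrix_k} is equivalent to $(1,l)\in E$, i.e.\ $l\in\mathcal{N}(1)$, establishing \eqref{eq:eqn1}. I expect the only real obstacle to be bookkeeping rather than any deep step: recognizing that the square root contributes nothing beyond its Gram matrix $\specdensmatrix(\theta_\f)$, tracking the index shift ($+1$) between the columns of $\mX^{(\f)}$ (nodes $2,\dots,p$) and the rows of $\specdensmatrix^{-1}(\theta_\f)$, and verifying the strict positivity of $[\specdensmatrix^{-1}(\theta_\f)]_{1,1}$ so that the support equivalence is exact rather than merely one-directional.
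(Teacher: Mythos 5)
Your proposal is correct and follows essentially the same route as the paper's own proof: identify $\specdensmatrix(\theta_\f)$ as the Gram matrix of $[\vy^{(\f)}\;\mX^{(\f)}]$, apply the block-inversion (Schur complement) formula to relate $\vbe^{(\f)}$ to the off-diagonal entries of the first row/column of $\specdensmatrix^{-1}(\theta_\f)$ up to the nonzero factor $-[\specdensmatrix^{-1}(\theta_\f)]_{1,1}$, and conclude via \eqref{equ_charact_edge_set_covariance_matrix_k}. The only cosmetic differences are that you write $\vbe^{(\f)}$ through the normal equations rather than the pseudo-inverse and work with the first row rather than the first column, which changes nothing since $\specdensmatrix^{-1}(\theta_\f)$ is Hermitian.
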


\begin{proof}
We first note that \eqref{equ_charact_edge_set_covariance_matrix_k} 
\vspace{-0.7mm}
implies 
\begin{equation}
\mathcal{N}(1) 
= \bigcup_{f\in [F]} \supp([\inv{\specdensmatrix}(\theta_\f)]_{2:p,1}) + 1 \nonumber 
\vspace{-0.2cm}
\end{equation}
where $[\inv{\specdensmatrix}(\theta_\f)]_{2:p,1}$ is the vector containing the entries $[\specdensmatrix^{-1}(\theta_{f})]_{2,1}, [\specdensmatrix^{-1}(\theta_{f})]_{3,1}, ... , [\specdensmatrix^{-1}(\theta_{f})]_{p,1}$. 
Next, we show that 
$
\supp([\inv{\specdensmatrix}(\theta_\f)]_{2:p,1}) = \supp(\vbe^{(\f)})
$, 
which will finalize the proof. 
%
By the construction of $\vy^{(\f)}$ and $\mX^{(\f)}$ in \eqref{eq:defyX}, we have
\begin{equation}
\label{equ_constrution_multitask_model_cov_matrix}
\specdensmatrix(\theta_\f) = 
\begin{bmatrix}
\norm[2]{\vy^{(\f)}}^2
&\hspace{-0.15cm} \herm{\vy^{(\f)}} \mX^{(\f)} \\[0.2cm]
\herm{\mX^{(\f)}} \vy^{(\f)} 
&\hspace{-0.15cm} \herm{\mX^{(\f)}} \mX^{(\f)}
\end{bmatrix}.
\end{equation}
Applying a well-known formula for the inverse of a block matrix \ \cite[Fact 2.17.3]{bernstein09} to $\specdensmatrix(\theta_\f)$ 
\vspace{-1mm}
yields 
$
\big[\specdensmatrix^{-1}(\theta_\f)\big]_{2:p,1} 
=
- \omega \vbe^{(\f)} 
$
with 
$
\omega \defeq \big(\herm{\vy^{(\f)}}  \!\vep^{(\f)}\big)^{-1}.\rmv
$
It also follows 
from \cite[Fact 2.17.3]{bernstein09} that $\omega= \big[ \specdensmatrix^{-1}(\theta_\f) \big]_{1,1} $ 
\pagebreak 
and hence $\omega>0$ by  \eqref{equ_uniform_bound_eigvals_specdensmatrix}, which allows us to conclude that  $
\supp([\inv{\specdensmatrix}(\theta_\f)]_{2:p,1}) = \supp(\vbe^{(\f)})
$. 
\end{proof}

\vspace{-0.1cm}

The essence of Proposition \ref{prop:ladf} is that it reduces the problem of determining the neighborhood $\mc N(1)$ to that of finding the joint support 
of the $\vbe^{(\f)}, f \in [F]$.  
%
Recovering the $\vbe^{(\f)}$ based on the observations \eqref{eq:gmmvform} is now recognized as a \emph{multitask learning} or \emph{generalized multiple measurement vector problem} \cite{BuhlGeerBook,Lounici09,HeckelGMMVAllerton}, which in turn is a special case (with additional structure) of a block-sparse signal recovery problem \cite{BlockSparsityEldarTSP,MishaliEldar2008,EldarRauhut2010}. 
Specifically, a multi-task learning problem can be cast as a block-sparse signal recovery problem by stacking the individual linear models in \eqref{eq:gmmvform} into a single linear model; the resulting system matrix $\diag\{ \mX^{(\task)}  \}_{\task \in [\nrtasks]}$ is block-diagonal. The approach described in \cite{Nowak2011} for VAR processes, albeit leading to a block-sparse recovery problem, does not result in a block-diagonal system matrix.

An efficient method for solving the multi-task learning problem at hand is the mLASSO, which can be formulated as follows 
\vspace{-1mm}
(e.g., \cite{BuhlGeerBook}): 
\begin{align}
\hat \vbe = \argmin_{\vbe \in \mathbb C^{F(p-1)}}  
\bigg\{
\frac{1}{F} \sum_{f \in [F]} \norm[2]{\vy^{(\f)} - \mX^{(\f)} \vbe^{(\f)} }^2  + \lambda \norm[2,1]{\vbed}
\bigg\} \nonumber\\[-2mm]
\label{equ_multitask_lasso_graph_sel_statistic}\\[-8.5mm]
\nonumber
\end{align} 
where $\lambda>0$ is the LASSO parameter, 
$
\vbe \defeq \big( {{\vbe}^{(1)}}^{T} \!\!\cdots\,\break {{\vbe}^{(F)}}^{T} \big)^{T} \in \mathbb{C}^{F(p-1)}, 
$ 
 and $\norm[2,1]{\vbed} \defeq \sum_{r \in [p-1]}  \norm[2]{ \vbed_{r} }$ with $\vbe_r \in \complexset^F$ given by $[ \vbe_r ]_{f} \defeq [  \vbe^{(\f)} ]_{r}$. 
To compute the estimate $\hat \vbe$, one does not need to compute $\vy^{(\f)}$ and $\mX^{(\f)}$ by taking the square root of $\specdensmatrix(\theta_\f)$ as in \eqref{eq:defyX}. 
To see this, we note that  \eqref{equ_multitask_lasso_graph_sel_statistic} is equivalent 
\vspace{-2mm}
to 
\begin{align}
\label{equ_def_multitask_LASSO_covariance_matrix1}
\hat{\vbe}& = \! \argmin_{\vbe \in \mathbb C^{F(p-1)}} \bigg\{ \frac{1}{\nrtasks}  \sum_{\task \in [\nrtasks]}   \hspace*{-1mm}  \big[   {{\vbe}^{(\task)} }^{H} \herm{\mX^{(\f)}} \mX^{(\f)} {\vbe}^{(\task)} 
 \nonumber \\[-1mm]
 & \hspace*{16mm} \!-\! 2 \Re \big\{ \herm{\vy^{(\f)}} \mX^{(\f)} {\vbe}^{(\task)} \big\}  \big] +  \lambda \norm[2,1]{\vbed} \bigg\}  \\[-8.5mm]
\nonumber
\end{align} 
%
and, by  \eqref{equ_constrution_multitask_model_cov_matrix}, $\herm{\vy^{(\f)}} \mX^{(\f)}$ and $\herm{\mX^{(\f)}} \mX^{(\f)}$ are submatrices of $\specdensmatrix(\theta_\f)$. Therefore, working with \eqref{equ_def_multitask_LASSO_covariance_matrix1} instead of  \eqref{equ_multitask_lasso_graph_sel_statistic} has the advantage that the square root $\specdensmatrix^{1/2}(\theta_\f)$ does not need to be computed in order to determine $\hat \vbe$. 

In summary, we have shown that the neighborhood $\mathcal N(1)$ can be found via the support of the mLASSO
estimate \eqref{equ_def_multitask_LASSO_covariance_matrix1}. Recognizing that this estimate depends on  $\specdensmatrix(\theta_\f)$, which is unknown, motivates the following inference algorithm (for general $r$), which simply uses $\ESDM(\theta_f)$ instead of $\specdensmatrix(\theta_\f)$ in \eqref{equ_def_multitask_LASSO_covariance_matrix1}.

\begin{algorithm}
Given the observation $\vx[1],...,\vx[N]$, the parameter $F$, the threshold parameter $\eta$, and the mLASSO parameter $\lambda$ (the choice of $\eta$ and $\lambda$ will be discussed in Section \ref{sec_var_sel_consist}), perform the following 
\vspace{1mm}
steps:

{\bf Step 1: }
For each $\f \in [F]$, compute the SDM estimate $\ESDM(\theta_f)$ according 
\vspace{1mm}
to \eqref{equ_est_BT_sdm_theta_t}. 

{\bf Step 2:}
Compute the mLASSO estimate for each $r \in [p]$ 
\vspace{-1mm}
as 
\begin{align}
\label{equ_def_multitask_LASSO_covariance_matrix}
\hat{\vbe}& = \! \argmin_{\vbe \in \mathbb C^{F(p-1)}} \bigg\{ \frac{1}{\nrtasks}  \!\! \sum_{\task \in [\nrtasks]}   \hspace*{-1mm}  \big[  { {\vbe}^{(\task)}}^{H} \mG_r{(\f)} {\vbe}^{(\task)} 
 \nonumber \\[-1mm]
 & \hspace*{16mm} \!-\! 2 \Re \big\{ \herm{\vc_r^{(\f)}} {\vbe}^{(\task)} \big\}  \big] +  \lambda \norm[2,1]{\vbed} \bigg\} 
\end{align} 
where $\mG_r^{(f)} \in \mathbb C^{(p-1)\times (p-1)}$ is the submatrix of $\ESDM(\theta_f) \in \mathbb C^{p\times p}$ obtained by deleting its $r$th column and $r$th row, and $\vc_r^{(f)} \in \mathbb C^{(p-1)}$ is obtained by deleting the $r$th entry in the $r$th column 
\vspace{1mm}
of $\ESDM(\theta_f)$. 

{\bf Step 3:}
Estimate the neighborhood of node $r$ as the index 
\vspace{-.7mm}
set 
\begin{equation}
\label{equ_def_est_neighborhood_LASSO_CS_sel}
\widehat{\mathcal{N}}(r) = \big\{ r'  \, \big| \,   \norm[2]{\hat \vbe_{r'}}  > \eta \big\}
\end{equation}
\label{algo_CS_graph_sel}
with $\hat \vbe_{r'} \in \complexset^F$ given by $[\hat \vbe_{r'} ]_{f} = [  \hat \vbe^{(\f)} ]_{r'}$. 
\vspace{.5mm}
\end{algorithm}
Our algorithm can be regarded as a generalization of the algorithm proposed in \cite{MeinBuhl2006} for i.i.d.~random processes to general stationary random processes. 
The new element here is that since we consider \emph{general} Gaussian vector processes, we have \emph{multiple} measurements available to determine the CIG. This is exploited through the use of mLASSO instead of plain LASSO as employed in \cite{MeinBuhl2006}.

\vspace*{-1mm}
\section{Performance Guarantees} 
\label{sec_var_sel_consist} 
\vspace*{-2mm}

We now present conditions for our CIG selection scheme to correctly identify, with high probability, the neighborhoods $\mathcal{N}(r)$, and in turn the edge set $E$, of the underlying CIG. 
Our analysis yields allowed growth rates for the problem dimensions, i.e., the number $p$ of scalar process components and the maximum node 
degree $s_{\text{max}}$, as functions of the sample size $N$. 
Moreover, we provide concrete choices for the threshold parameter $\eta$ in \eqref{equ_def_est_neighborhood_LASSO_CS_sel} and the mLASSO parameter $\lambda$ in \eqref{equ_def_multitask_LASSO_covariance_matrix}.

A necessary and sufficient condition for mLASSO to correctly identify the joint support of the underlying parameter vector is the \emph{incoherence condition} \cite[Eqs.\ (4)--(5)]{BachConsistency2008}. 
This condition is a worst-case (in our case, over frequency $\theta_f$) condition \cite{HeckelGMMVAllerton} in that it needs the system matrices $\{ \mathbf{X}^{(\task)} \}_{\task \in [\nrtasks]}$ in \eqref{eq:gmmvform} (again, we consider node $r=1$) to be ``well-conditioned''  for \emph{all} $\task \in [\nrtasks]$. In other words, the incoherence condition does not predict any performance improvement owing to the availability of $\nrtasks$ measurements \eqref{eq:gmmvform} instead of just one. 
We will therefore base our performance analysis on the \emph{multitask compatibility constant} \cite{BuhlGeerBook}, which is defined, for a given index set $\mathcal{S} \subseteq [p-1]$ of size $s$, as 
\vspace*{-2mm}
\begin{equation} 
\label{equ_def_multitask_compatibility_condition_bound}
 \phi(\mathcal{S}) \defeq \min_{ \vbed \in \mathbb{A}(\S)}   \frac{1}{\| \vbed_{\S} \|_{2,1}} \bigg( s\sum_{\task \in [\nrtasks]} \norm[2] {\mX^{(\task)} \vbed^{(\task)} }^{2} \bigg)^{1/2} 
\vspace*{-1mm}
\end{equation} 
with $\mathbb{A}(\S) \triangleq \big\{  \vbed \in  \mathbb{C}^{(p-1)\nrtasks}  \big| \| \vbed_{\S} \|_{2,1} > 0 \mbox{ and } \| \vbed_{\comp{\S}} \|_{2,1} \leq  3 \| \vbed_{\S} \|_{2,1} \}$. Here, $\vbe_\S \defeq \big( {\vbe_\S^{(1)}}^{T} \!\!\cdots\, {\vbe_\S^{(F)}}^{T} \big)^{T}$ where $\vbe_\S^{(f)}$ is the restriction of the vector $\vbe^{(f)}$ to the entries in $\S$.  
Invoking the concept of the multitask compatibility constant will be seen below to yield an average (across frequency $\theta_f$) requirement on the SDM $\specdensmatrix(\theta_f)$ for Algorithm \ref{algo_CS_graph_sel} to correctly identify the CIG. 

We start by defining the class $\mathcal M \!=\! \mathcal{M}(s_{\text{max}}, \rho_{\text{min}},\mu^{(h_{1})},\break \phi_{\text{min}},A,B)$ 
of $p$-dimensional, zero-mean, stationary, Gaussian processes $\mathbf{x}[n]$ with CIG $\cig=([p],E)$ of maximum node degree $s_{\text{max}}$ (cf.\ \eqref{equ_def_maximum_node_degree}) and SDM $\specdensmatrix(\theta) \in \mathbb{C}^{p \times p}$ 
satisfying \eqref{equ_uniform_bound_eigvals_specdensmatrix} and \eqref{equ_charact_edge_set_covariance_matrix_k}. The remaining parameters characterizing this class are defined as 
follows:
\vspace*{-1mm}
\begin{itemize} 
\item \emph{Minimum partial coherence} $\rho_{\text{min}}>0$: This parameter quantifies the minimum 
\pagebreak 
partial correlation between the spectral components of the process. In particular, we require that, for every $r \in [p]$,  $r' \in \mathcal{N}(r)$, 
\vspace*{-.8mm}
\begin{equation} 
\label{cond_rho_min}
  \sum_{\task \in [\nrtasks]} 
\left|
\frac{
 \big[ \specdensmatrix^{-1}(\theta_{\task}) \big]_{r,r'} 
}{
\big[\specdensmatrix^{-1}(\theta_{\task}) \big]_{r,r} 
} 
\right|^{2}  
\geq \rho^{2}_{\text{min}}.  \nonumber
\vspace*{-3mm}
\end{equation} 
\item \emph{ACF moment} $\mu^{(h_{1})}$: We quantify the spectral  smoothness of the processes in $\mathcal{M}$ using the ACF moment \eqref{equ_def_generic_moments_ACF} with weight function $h_{1}[m] \defeq  \left| 1- w[\lagvar](1-|\lagvar|/N) \right|$, where $w$ is the window function in \vspace*{-2mm}
\eqref{equ_est_BT_sdm_theta_t}. 
\item \emph{Minimum multitask compatibility constant}\footnote{
The relation between \eqref{equ_def_multitask_compatibility_condition_bound} and  \eqref{equ_def_spectral_compatibility_constant} is brought out by noting that, for $r=1$, $\norm[2] {\mX^{(\task)} \vbed^{(\task)} }^{2} = \herm{\vbed^{(\task)}}\herm{\mX^{(f)}}\mX^{(f)}\vbed^{(\task)}$ and  $ \herm{\mX^{(f)}}\mX^{(f)} = \mG_r^{(f)} $. 
}
 $\phi_{\text{min}}>0$ (cf.\ \eqref{equ_def_multitask_compatibility_condition_bound}): For every process in $\mathcal{M}$, we require
\vspace*{-1.5mm}
\begin{align} 
\frac{1}{\| \vbed_{\mathcal{N}(r)} \|_{2,1} } \bigg( |\mathcal{N}(r)| \!\! \sum_{\task \in [\nrtasks]} { \vbed^{(\task)} }^{H} \mG_r^{(f)} \vbed^{(\task)} \! \bigg)^{1/2}
\hspace{-0.3cm}\geq \phimin 
\nonumber\\[-3mm]
\label{equ_def_spectral_compatibility_constant}\\[-8.5mm]
\nonumber
\end{align}
to hold for all $\vbed \in \mathbb{A}(\mathcal{N}(r))$ and all $r \in [p]$. 
\end{itemize} 
Combining techniques from large deviation theory \cite{RauhutFoucartCS} to bound the error $\| \ESDM(\theta_{\task}) - \specdensmatrix(\theta_{\task}) \|_{\infty}$ with a deterministic performance analysis of the mLASSO \cite{BuhlGeerBook}, one can derive the following result.
\vspace*{-0mm}
\begin{theorem} 
\label{thm_main_result_consistency_CS_graph_model}
Consider a process $\mathbf{x}[n]$ belonging to the class $\mathcal M$.
Let $\widehat{\mathcal{N}}(r)$ be the estimate of $\mathcal{N}(r)$ given by \eqref{equ_def_est_neighborhood_LASSO_CS_sel}, based on sample size $N$ and with the choices $\lambda =    \phi_{\emph{min}}^{2} \rho_{\emph{min}}/(18 s_{\emph{max}} \nrtasks)$ (in \eqref{equ_def_multitask_LASSO_covariance_matrix}) and $\eta=\rho_{\emph{min}}/2$ (in \eqref{equ_def_est_neighborhood_LASSO_CS_sel}). 
Then, if for some $\delta > 0$, the sample size $N$ and the ACF moment $\mu^{(h_{1})}$ satisfy
\vspace*{-1mm}
\begin{align} 
N  >  2^{8}  \log\bigg(\frac{4 \nrtasks p^3}{ \delta}\bigg)  \frac{\| w \|_{1}^{2} B^{2} s^{3}_{\emph{max}}}{\kappa^2}  \mbox{\; and \;} \mu^{(h_{1})} \! \leq \! \frac{\kappa}{2 s^{3/2}_{\emph{max}}}
\nonumber\\[-.5mm]
\label{eq:condonN}\\[-9mm]
\nonumber\end{align} 
with $\kappa \defeq (\phi_{\emph{min}}^{2}/ 174) \frac{\rho_{\emph{min}}}{\sqrt{\nrtasks}} \sqrt{A/B}$, 
the probability of Algorithm 1 delivering the correct edge set $E$ is at least $1-\delta$, i.e., 
$\prob \big\{  \bigcap_{r \in [p]} \{ \widehat{\mathcal{N}}(r)  = \mathcal{N}(r) \} \big\} \geq 1-\delta$. 
\end{theorem}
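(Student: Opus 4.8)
The plan is to follow the two-ingredient structure anticipated in the paragraph preceding the theorem: a \emph{deterministic} recovery guarantee for the mLASSO \eqref{equ_def_multitask_LASSO_covariance_matrix} that holds whenever the plugged-in SDM estimate is sufficiently accurate, combined with a \emph{probabilistic} bound ensuring that $\|\ESDM(\theta_\f)-\specdensmatrix(\theta_\f)\|_\infty$ is small simultaneously for all $\f \in [F]$ with probability at least $1-\delta$. By Proposition \ref{prop:ladf} it suffices to treat a single node, say $r=1$, and to recover the joint support of the $\vbe^{(\f)}$; I would carry out the analysis for this fixed node and only reinstate the union over $r \in [p]$ (with the attendant factor of $p$ inside the logarithm) in the very last step to obtain the intersection event stated in the theorem.

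For the deterministic step, the crucial structural fact --- already apparent in the proof of Proposition \ref{prop:ladf} --- is that with the \emph{true} SDM the model \eqref{eq:gmmvform} is noiseless, in the sense that $\herm{\mX^{(\f)}}\vep^{(\f)}=0$ because $\vep^{(\f)}$ is by construction orthogonal to $\spn(\mX^{(\f)})$. Consequently the entire ``effective noise'' driving the mLASSO stems from replacing $\specdensmatrix(\theta_\f)$ by $\ESDM(\theta_\f)$, i.e.\ from the perturbations of the Gram blocks $\mG_r^{(\f)}$ and the cross-terms $\vc_r^{(\f)}$. I would then invoke the compatibility-constant oracle inequality for the multitask LASSO \cite{BuhlGeerBook}: on the event that this gradient perturbation, measured in the dual $\|\cdot\|_{2,\infty}$ norm, does not exceed $\lambda/2$, the estimate satisfies $\|\hat{\vbe}-\vbe\|_{2,1} \lesssim \lambda\,\smax/\phimin^{2}$, where $\phimin$ is the multitask compatibility constant \eqref{equ_def_spectral_compatibility_constant} assumed for the class $\mathcal M$. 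Inserting the prescribed $\lambda=\phimin^{2}\rho_{\text{min}}/(18\,\smax F)$ makes this bound a fraction of $\rho_{\text{min}}$, so each block error obeys $\|\hat{\vbe}_{r'}-\vbe_{r'}\|_2<\rho_{\text{min}}/2$. Thresholding at $\eta=\rho_{\text{min}}/2$ then recovers the support exactly: the minimum-partial-coherence condition \eqref{cond_rho_min} forces $\|\vbe_{r'}\|_2\ge\rho_{\text{min}}$ on true neighbors --- since $[\specdensmatrix^{-1}(\theta_\f)]_{2:p,1}=-\omega\,\vbe^{(\f)}$ with $\omega=[\specdensmatrix^{-1}(\theta_\f)]_{1,1}$, cf.\ Proposition \ref{prop:ladf} --- whereas the non-neighbor blocks are exactly zero.

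For the probabilistic step, I would split $\ESDM(\theta_\f)-\specdensmatrix(\theta_\f)$ into a deterministic bias $\EX\{\ESDM(\theta_\f)\}-\specdensmatrix(\theta_\f)$ and a zero-mean fluctuation $\ESDM(\theta_\f)-\EX\{\ESDM(\theta_\f)\}$. Since $\EX\{\EACF[\lagvar]\}=(1-|\lagvar|/N)\,\ACF[\lagvar]$, the bias is bounded entrywise by $\sum_\lagvar |1-w[\lagvar](1-|\lagvar|/N)|\,\|\ACF[\lagvar]\|_\infty=\mu^{(h_1)}$ --- precisely the ACF moment \eqref{equ_def_generic_moments_ACF} with the weight $h_1$ declared for $\mathcal M$ --- so the smoothness hypothesis $\mu^{(h_1)}\le\kappa/(2\smax^{3/2})$ in \eqref{eq:condonN} caps this term. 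Each entry of the fluctuation is a Gaussian quadratic form in $(\vx[1],\dots,\vx[N])$, so a Hanson--Wright / large-deviation bound \cite{RauhutFoucartCS} yields a sub-exponential tail with scale governed by $\|w\|_1 B/\sqrt N$; a union bound over the $F$ frequencies and over all node and entry indices (which produces the $4Fp^3/\delta$ inside the logarithm) shows that the lower bound on $N$ in \eqref{eq:condonN} pushes the fluctuation below $\kappa/(2\smax^{3/2})$. Adding the two pieces gives $\|\ESDM(\theta_\f)-\specdensmatrix(\theta_\f)\|_\infty\le\kappa/\smax^{3/2}$ uniformly in $\f$, and I would verify that this entrywise accuracy, propagated through the block size (at most $\smax$), the task count $F$, and the conditioning constants $A,B$ of \eqref{equ_uniform_bound_eigvals_specdensmatrix} --- exactly the quantities that the definition of $\kappa$ absorbs --- delivers the $\lambda/2$ gradient bound required above.

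The step I expect to be the main obstacle is this last translation: converting the entrywise $\ell_\infty$ error on the SDM into a bound, in the correct dual norm, on the mLASSO gradient at the true $\vbe$, while simultaneously certifying that the compatibility constant of the \emph{perturbed} design $\{\mG_r^{(\f)}\}$ stays bounded below by a constant multiple of the $\phimin$ assumed for the true SDM. Both tasks amount to controlling how a small perturbation of the Gram blocks acts on quadratic forms restricted to the cone $\mathbb{A}(\mathcal N(r))$, and it is here that the factors $\sqrt{A/B}$, $\sqrt{F}$ and the powers of $\smax$ entering $\kappa$ and $\lambda$ must be tracked; the explicit constants $2^{8}$, $174$ and $18$ are the bookkeeping output of chaining these estimates. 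The Gaussian concentration itself is routine but must be made uniform over $\theta_\f$ and robust to the windowing $w[\lagvar]$, which is the only other point that demands some care.
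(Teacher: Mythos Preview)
Your proposal is correct and follows precisely the approach the paper itself indicates: the paper does not give a full proof but only the one-sentence sketch ``Combining techniques from large deviation theory \cite{RauhutFoucartCS} to bound the error $\| \ESDM(\theta_{\task}) - \specdensmatrix(\theta_{\task}) \|_{\infty}$ with a deterministic performance analysis of the mLASSO \cite{BuhlGeerBook},'' and your two-ingredient plan (compatibility-constant oracle inequality for mLASSO plus bias/fluctuation decomposition with Gaussian quadratic-form concentration and union bound) is exactly this program made explicit. Your identification of the delicate step---transferring the entrywise SDM error to the dual-norm gradient bound and to stability of the empirical compatibility constant---is also where the bookkeeping for the constants $2^8$, $174$, $18$ and the definition of $\kappa$ would reside, consistent with the paper's parameter choices.
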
 
Theorem \ref{thm_main_result_consistency_CS_graph_model} shows that success is guaranteed with high probability if the sample size $N$ scales logarithmically in $p$ and 
polynomially in $s_{\text{max}}$, and if the process is sufficiently smooth, i.e., $\mu^{(h_{1})}$ is sufficiently small. 

Let us particularize Theorem \ref{thm_main_result_consistency_CS_graph_model} to the special case of a VAR(1) process $\mathbf{x}[n]$ as considered in \cite{Bento2010,songsiri2010,Songsiri09}, i.e., 
$\mathbf{x}[n] = \mathbf{A} \mathbf{x}[n-1] + \mathbf{w}[n]$ with i.i.d.\ noise $\mathbf{w}[n] \sim \mathcal{N}(\mathbf{0}, \sigma^{2} \mathbf{I})$. As in \cite{Bento2010}, we 
take $\mathbf{A}$ to be the adjacency matrix of a dependency graph $\mathcal{D}$, which is related to---but in general different from---the CIG $\cig$. We assume that $\mathcal{D}$ is a simple graph of maximum 
node degree $d_{\text{max}}$ and the nonzero entries of the adjacency matrix are all equal to a single positive number $a \leq 1/(2d_{\text{max}})$. 
The VAR(1) process we consider 
satisfies \eqref{equ_uniform_bound_eigvals_specdensmatrix} and  \eqref{equ_def_maximum_node_degree} by its definition and belongs to the class $\processclass$ with $s_{\text{max}} = d_{\text{max}}^{2}$, $\rho_{\text{min}} = a$, $\phi_{\text{min}} = \sigma^{2}/4$, $A=\sigma^{2}/4$, and $B=4\sigma^{2}$. 
Moreover, condition \eqref{equ_charact_edge_set_covariance_matrix_k} 
\pagebreak 
is satisfied as soon as $\nrtasks \geq 3$ since the entries of $\specdensmatrix^{-1}(\theta)$ are rational 
functions in $\exp(j \theta)$ with numerator degree $2$. The threshold in \eqref{eq:condonN} becomes 
$N >  C_{1} \log\big(\frac{4 \nrtasks p^3}{ \delta}\big)  \| w \|_{1}^{2} \nrtasks d^{6}_{\text{max}} /a^2$, with a constant $C_{1}$ that is independent of $\delta$, $p$, $d_{\text{max}}$, and $a$. 
In contrast, the corresponding threshold for the method in \cite{Bento2010} is $N > C_{2}  \log\big(\frac{4 d_{\text{max}} p^{2}}{ \delta}\big) d_{\text{max}}^{3}/a^{2}$, with a constant $C_{2}$ that is independent of $\delta$, $p$, $d_{\text{max}}$, and $a$. 
The difference between the growth rates of these thresholds with respect to $d_{\text{max}}$ may be explained by the fact that the method in \cite{Bento2010} is tailored to VAR processes whereas our approach applies to general (spectrally smooth) stationary processes.

\vspace{-1mm}

\section{Numerical Results}
\label{sec_numerical_results}

\vspace{-1.5mm}


\begin{figure}
\begin{center}
\hspace*{-2mm}\includegraphics{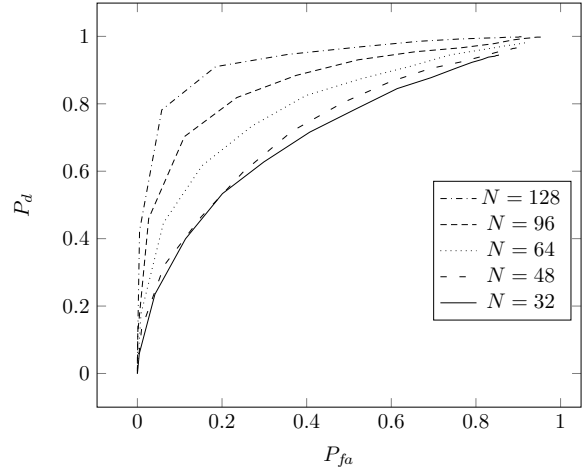}
\end{center}

%
%
%
%
\vspace*{-7mm}
  \caption{ROC curves for the compressive selection scheme.} 
\label{fig_ROC}

\end{figure}


We generated\footnote{Matlab code to reproduce the results in this section is available at \rm{http://www.nt.tuwien.ac.at/about-us/staff/alexander-jung/}. } a Gaussian process $\mathbf{x}[n]$ of dimension $p=64$ by applying a finite impulse 
response (FIR) filter $g[\lagvar]$ of length $2$ to a zero-mean, stationary, white, Gaussian noise process $\mathbf{e}[n] \sim \mathcal{N}(\mathbf{0},\mathbf{C}_{0})$. 
The covariance matrix $\mathbf{C}_{0}$ was chosen such that the resulting CIG $\cig=([p],E)$ satisfies \eqref{equ_def_maximum_node_degree} with $s_{\text{max}}=3$.
The filter coefficients $g[\lagvar]$ are such that the magnitude of the associated transfer function is uniformly bounded from above and below by positive constants, thereby ensuring that conditions \eqref{equ_uniform_bound_eigvals_specdensmatrix} and \eqref{equ_charact_edge_set_covariance_matrix_k} (for arbitrary $F$) are satisfied.  
We then computed the estimates $\widehat{\mathcal{N}}(r)$ using Algorithm \ref{algo_CS_graph_sel} with window function $w[\lagvar] = \exp(- \lagvar^2 / 44 )$ and $\nrtasks = 4$.
We set $\lambda = c_{1} \phi^2_{\text{min}} \rho_{\text{min}} / (18 s_{\text{max}} \nrtasks)$ and $\eta = \rho_{\text{min}}/2$, where  $\phi_{\text{min}} = 0.0616$, $\rho_{\text{min}} = 0.5$, and $c_{1}$ was varied in the range $[10^{-3},10^{3}]$. 

In Fig.~\ref{fig_ROC}, we show receiver operating characteristic (ROC) curves with the average fraction of false alarms $P_{f\!a} \defeq \frac{1}{M} \sum_{i \in [M]} \frac{\sum_{(r,r') \notin E} I( r' \in \widehat{\mathcal{N}}_{i}(r))}{p(p-1)/2-|E|}$ and the average fraction of correct decisions $P_{d} \defeq \frac{1}{M} \sum_{i \in [M]} \frac{\sum_{(r,r') \in E} I( r' \in \widehat{\mathcal{N}}_{i}(r))}{|E|}$ for varying mLASSO parameter $\lambda$. Here, $\widehat{\mathcal{N}}_{i}(r)$ denotes the neighborhood estimate obtained from Algorithm \ref{algo_CS_graph_sel} in the $i$-th simulation run. We averaged over $M=10$ independent simulation runs. 



\end{document}